\newtheorem{assumption}{Assumption}
\begin{document}
\title{Class Prior Estimation under Covariate Shift: No Problem?}
\titlerunning{Class Prior Estimation under Covariate Shift}
\author{Dirk Tasche\orcidID{0000-0002-2750-2970}}
\authorrunning{D. Tasche}
\institute{Independent researcher \\
\email{dirk.tasche@gmx.net}}
\maketitle              % typeset the header of the contribution
\begin{abstract}
We show that in the context of classification the property of source 
and target distributions to be related by covariate shift may be lost if the 
information content captured in the covariates is reduced, for instance 
by dropping components or mapping
into a lower-dimensional or finite space. 
As a consequence, under covariate shift simple approaches to class prior estimation 
in the style of classify and count with or without adjustment are infeasible.
We prove that transformations of the covariates that preserve the covariate
shift property are necessarily sufficient in the statistical sense for the full set of covariates.
A probing algorithm as alternative approach to
class prior estimation under covariate shift is proposed.
\keywords{Covariate shift \and Prior probability shift \and Quantification \and Class prior estimation
\and Prevalence estimation \and Sufficiency}
\end{abstract}

\section{Introduction}

Class prior estimation (also known as quantification, class distribution estimation, prevalence estimation
etc.) may be considered one of the tasks referred to under the general term domain adaptation. 

Domain adaptation means
adapting algorithms designed for a source (training) dataset (also distribution or domain)
to a target (test) dataset. The source and target distributions may be different, 
a phenomenon which is called \emph{dataset shift}. In this paper, attention is
restricted to `unsupervised' domain adaptation. This term refers on the one hand to 
the situation where under the 
source distribution all events and realisations of random variables -- including 
the target (label) variable -- are observable such that in principle the
whole distribution can be estimated. On the other hand under the target distribution only the marginal 
distribution of the covariates (features) can be observed, via realisations of the covariates.
The target distribution class labels cannot be observed at all or only with delay.

Moreno-Torres et al.~\cite{MorenoTorres2012521} proposed the following popular
taxonomy of types of dataset shift:
\begin{itemize}
\item Covariate shift: Source and target posterior class probabilities are the same but 
    source and target covariate distributions may be different.
\item Prior probability shift (label shift \cite{pmlr-v80-lipton18a}, global drift \cite{hofer2013drift}):
    Source and target class-conditional covariate distributions are the same but
    source and target prior class probabilities may be different.
\item Concept shift: Source and target covariate distributions are the same but 
    source and target posterior class probabilities may be different, or 
    source and target prior class probabilities are the same but source and 
    target class-conditional covariate distributions may be different.
\item Other shift: Any dataset shift not captured by the previous types.
\end{itemize}
Covariate shift and prior probability shift are described in constructive terms. Based on
their defining properties source and target distributions are fully specified. For this reason,
a host of focussed literature is available for these two types of dataset shift. In contrast,
it is hardly possible
to make specific statements about the two other types of shift such that the literature on these types is
much more diverse and hard to capture.

In this paper we focus on covariate shift and a classification setting. We choose a
measure-theoretic approach that is particularly suitable for this context as it facilitates a rigorous joint
treatment of continuous and discrete random variables, or covariates and class labels more
specifically. We work in the same binary classification setting as 
Ben-David et al.~\cite{Ben-David2007Representations} and
Johansson et al.~\cite{pmlr-v89-johansson19a}.
Like Ben-David et al.\ and Johansson et al., 
we focus on the binary case but the results are easily generalised to the multi-class case.

Prior probability shift is robust in the following sense: If the set of covariates is transformed
in a way that reduces the information reflected by them (e.g.\ by dropping components or mapping
into a lower-dimensional or finite space) then the resulting source and target joint distributions
of covariates and labels are still related by prior probability shift. As a consequence, simple 
approaches to class prior estimation under prior probability shift can be designed which avoid the
need to estimate the full class-conditional covariate distributions.\footnote{%
The simplification may come at a cost of increased variance of the estimator (Tasche 
\cite{Tasche2021Minimising}).} The primary example for such 
an approach is the `confusion matrix method' (Gart and Buck \cite{buck1966comparison}; 
Saerens et al.~\cite{saerens2002adjusting}; 
`adjusted count' in Forman \cite{forman2005counting}). 

We show by examples and by theoretical analysis that such robustness is not displayed by covariate shift. 
Under the condition that the target distribution is absolutely continuous with respect to
the source distribution, we prove that a set of covariates passes on the covariate shift property
if and only if the transformed set of covariates 
is `sufficient' in the sense of Adragni and Cook \cite{adragni2009sufficient} and 
Tasche \cite{Tasche2022} for the untransformed set under the source distribution.
The result refines an observation of Johansson et al.~\cite{pmlr-v89-johansson19a} 
who found that covariate shift was inherited ``only if'' the 
transformation was invertible.

An important consequence of this finding is that in general for class prior estimation
under covariate shift, 
simplification in the sense of reducing
the complexity of the covariate set is not a viable path because the covariate shift property of 
identical posterior class probabilities between source and target 
distributions might get lost. We point to
a potential alternative approach, based on the so-called `probing' method of Langford and Zadrozny
\cite{langford2005estimating}.

The plan of this paper is as follows: 
We introduce the assumptions and the notation for this paper in Section~\ref{se:Setting}.
In  Section~\ref{se:loss} we give examples of how loss of information may affect the
covariate shift property. The main result (Theorem~\ref{th:main} of this paper) is presented
in Section~\ref{se:Main} while Section~\ref{se:Discuss} provides some comments on the result.
A proposal for applying `probing' 
to class prior estimation is made in Section~\ref{se:Probing}. The paper concludes 
with a short summary in Section~\ref{se:Conc}.

\section{Assumptions and Notation}
\label{se:Setting}

In this paper, we work only at population (distribution) level as this level is appropriate for the design of
estimators and predictors as well as the study of their fundamental properties. A detailed 
treatment of the intricacies of sample properties is not needed.

We follow the example of Scott \cite{Scott2019} who introduced consistent concepts and notation
for appropriately dealing with the classification setting we need. As the concept of information
plays a more important role in this paper than in Scott's, we dive somewhat deeper into
the measure-theoretic details of the setting than Scott.

\subsection{Setting for Binary Classification in the Presence of Dataset Shift}
\label{se:measure}

We introduce a measure-theoretic setting, expanding the setting of Scott \cite{Scott2019} and adapting 
the approach of Holzmann and Eulert \cite{HolzmannInformation2014} and
Tasche \cite{Tasche2022}. Phrasing the context in measure theory terms is particularly efficient when random variables 
with continuous and discrete distributions are studied together like in the case of binary or 
multi-class classification. Moreover, the measure-theoretic notion of $\sigma$-algebras allows for the 
convenient description of differences in available information. 

We use the following population-level description of the binary classification problem in terms
of measure theory.
See standard textbooks on probability theory like Billingsley \cite{billingsley1986probability} or 
Klenke \cite{klenke2013probability} for
formal definitions and background of the notions introduced in Assumption~\ref{as:setting}.
\begin{assumption}\label{as:setting}
$(\Omega, \mathcal{A})$ is a measurable space. The \emph{source distribution} $P$ and
the \emph{target distribution} $Q$ are probability measures on $(\Omega, \mathcal{A})$.
An event $A_1 \in \mathcal{A}$ with $0 < P[A_1] < 1$ and 
a sub-$\sigma$-algebra $\mathcal{H} \subset \mathcal{A}$ with $A_1 \notin \mathcal{H}$ are fixed.
$A_0 = \Omega \setminus A_1$ is the complementary event of $A_1$ in $\Omega$.
\end{assumption}

In the literature, $P$ is also called `training distribution' while $Q$ is also referred to
as `test distribution'.

\emph{Interpretation.}
The elements $\omega$ of $\Omega$ are objects (or instances) with class (label) and covariate
(feature) attributes. 
$\omega \in A_1$ means that $\omega$ belongs to class 1 (or the positive class). 
$\omega \in A_0$
means that $\omega$ belongs to class 0 (or the negative class).

The $\sigma$-algebra $\mathcal{A}$ of events $M \in \mathcal{A}$ is a collection of subsets $M$
of $\Omega$ with the property that they can be assigned probabilities $P[M]$ and $Q[M]$ in a logically consistent way.
In the literature, thanks to their role of reflecting the available information, $\sigma$-algebras are 
sometimes also called \emph{information set} (Holzmann and Eulert \cite{HolzmannInformation2014}). In the following,
we use both terms exchangeably.

\emph{Binary classification problem.} The sub-$\sigma$-algebra $\mathcal{H} \subset \mathcal{A}$ 
contains the events which are observable
at the time when the class label of an object $\omega$ has to be predicted. Since $A_1 \notin \mathcal{H}$, then
the class of an object may not yet be known. It can only be predicted on the basis of the events $H \in \mathcal{H}$
which are assumed to reflect the features of the object.

\emph{Dataset shift.} We denote by $\mathcal{H}_A$ the minimal sub-$\sigma$-algebra of $\mathcal{A}$ containing
both $\mathcal{H}$ and
$\sigma(\{A_1\}) = \{\emptyset, A_1, A_0, \Omega\}$, i.e. $\mathcal{H}_A =
\sigma\bigl(\mathcal{H} \cup \sigma(\{A_1\})\bigr)$.
The $\sigma$-algebra $\mathcal{H}_A$ can be represented as 
\begin{equation}\label{eq:HA}
\mathcal{H}_A \ =\
\bigl\{(A_1 \cap H_1) \cup (A_0 \cap H_0): H_1, H_0\in \mathcal{H}\bigr\}.
\end{equation}
A standard assumption in machine learning is that source and target distribution are the
same, i.e.\ $P = Q$.
The situation where $P[M] \neq Q[M]$ holds for at least one $M\in \mathcal{H}_A$ is called
\emph{dataset shift} (Moreno-Torres et al.~\cite{MorenoTorres2012521}, Definition~1).

\emph{Class prior estimation}. Under dataset shift as defined above, typically the  
prior probabilities $P[A_1]$ of the positive class in the source distribution 
(assumed to be observable) and $Q[A_1]$ in
the target distribution (assumed to be unknown or known with delay only) are different.
Class prior estimation in the binary classification context of Assumption~\ref{as:setting}
is the task to estimate $Q[A_1]$, based on observations from $P$ (the entire source distribution)
and from\footnote{%
$Q|\mathcal{H}$ stands for the measure $Q$ with domain restricted to $\mathcal{H}$.} 
$Q|\mathcal{H}$ (the target distribution of the covariates, also called features).

\emph{Notation.} 
Denote by $\mathbf{1}_M$ the indicator function of an event $M$, i.e.\ $\mathbf{1}_M(\omega)=1$
if $\omega\in M$ and $\mathbf{1}_M(\omega)=0$ if $\omega \notin M$.

If $X$ is a real-valued random variable on a probability space $(\Omega, \mathcal{F}, P)$ 
and $\mathcal{G}\subset\mathcal{F}$ is a sub-$\sigma$-algebra of $\mathcal{F}$, then
a random variable $\Psi$ is called \emph{expectation of $X$ conditional on $\mathcal{G}$}
(see, e.g., Definition~8.11 of Klenke \cite{klenke2013probability})
if it has the following two properties:\\
(i) $\Psi$ is $\mathcal{G}$-measurable.\\
(ii) For all events $G \in \mathcal{G}$ it holds that 
    $E_P[\mathbf{1}_G\,X] = E_P[\mathbf{1}_G\,\Psi]$.
    
In the following, we use the usual shorthand notation $\Psi = E_P[X\,|\,\mathcal{G}]$.
In the case of an indicator function of an event $F \in \mathcal{F}$, the conditional
expectation $E_P[\mathbf{1}_F\,|\,\mathcal{G}]$ is called \emph{probability of $F$ conditional on $\mathcal{G}$}
and denoted by $P[F\,|\,\mathcal{G}]$.

\subsection{Reconciliation of Machine Learning and Measure Theory Settings}

The  setting of Assumption~\ref{as:setting} is similar
to a standard setting for binary classification in the machine learning and pattern 
recognition literature (see e.g.~Scott \cite{Scott2019} or Devroye et al.~\cite{devroye1996probabilistic}): 

Typically a random vector $(X, Y)$ is studied, where $X$ stands for the covariates of an object 
and $Y$ stands for its class. $X$ is assumed to take values in a feature space $\mathfrak{X}$ (often
$\mathfrak{X} = \mathbb{R}^d$) while 
$Y$ takes either the value 0 (or $-1$) or the value 1 (for the positive class).

Standard formulation of the binary classification problem: 
Predict the value of $Y$ from $X$ or make
an informed decision on the occurence or non-occurrence of the event $Y=1$ despite only being able 
to observe the values of $X$.

This is captured by the measure-theoretic setting of Assumption~\ref{as:setting}: 
Assume that $X$ and $Y$ map $\Omega$ into $\mathfrak{X}$ and $\{0,1\}$ respectively. 
Choose $\mathcal{H} = \sigma(X)$ (the
smallest sub-$\sigma$-algebra of $\mathcal{A}$ such that $X$ is measurable) and 
$A_1 = \{Y=1\} = \{\omega\in\Omega: Y(\omega)=1\}$.

In many machine learning papers, the image (or pushforward) measure of $P$ (or $Q$ if it refers to the 
target distribution) under the mapping $(X,Y)$ (see Definition~1.98 of Klenke \cite{klenke2013probability})
is denoted by $p(x,y)$.

Often no probability space is specified but only samples $(x_1, y_1), \ldots, (x_n, y_n)$ of
realisations of $(X,Y)$ from the source distribution and $x_{n+1}, \ldots, x_{n+m}$ of realisations 
of $X$ from the
target distribution are assumed to be given. This context is sometimes called `unsupervised domain adaptation'.
Usually the samples are assumed to have been generated through i.i.d.\ drawings from
some population distributions which may be identified with $(\Omega, \mathcal{A}, P)$ 
and $(\Omega, \mathcal{A}, Q)$ as described above.

\subsection{More on Dataset Shift}
\label{se:more}

Arguably, the two most important special cases of dataset shift are the following, in the terms introduced in Assumption~\ref{as:setting}:
\begin{itemize}
\item \emph{Covariate shift} (Moreno-Torres et al.~\cite{MorenoTorres2012521}, Definition~3; Storkey
\cite{storkey2009training}, Section~5):\\ 
In this case, $P|\mathcal{H} \neq Q|\mathcal{H}$ holds but $P[A_1\,|\,\mathcal{H}] = 
Q[A_1\,|\,\mathcal{H}]$, i.e.\ the posterior probabilities  under $P$ and $Q$ are the same but
the covariate distributions may be different.
\item \emph{Prior probability shift} (Moreno-Torres et al.~\cite{MorenoTorres2012521}, Definition~2;
Storkey \cite{storkey2009training},
Section~6):\\ In this case, we have $P[A_1] \neq Q[A_1]$ but 
$P[H\,|\,A_i] = Q[H\,|\,A_i]$, $i\in\{0,1\}$, for all $H\in \mathcal{H}$, i.e.\
the class-conditional covariate source and target distributions are the same but the unconditional
class prior probabilities may be different.
\end{itemize}
Covariate shift and prior probability shift are similar in the sense that in both cases
one of the conditional distributions (of $A_1$ conditional on $\mathcal{H}$ and of $\mathcal{H}$ conditional
on $\sigma(\{A_1\})$ respectively) are invariant between $P$ and $Q$, and at least one pair 
of the marginal distributions
(of $\mathcal{H}$ and $\sigma(\{A_1\})$ respectively) are different.

Thanks to the invariance assumptions on the conditional distributions in prior probability shift and in covariate 
shift, these two types of dataset shift are relatively easily amenable to mathematical treatment and, 
therefore, have
received considerable attention by researchers. See e.g.\ Qui{\~n}onero-Candela et al.~\cite{quinonero2008dataset} 
for covariate shift and Caelen \cite{caelen2017quantification} for prior probability shift, as well as
the references therein.

Note that the definition of dataset shift in Section~\ref{se:measure}
explicitly mentions an associated set of covariates (features, 
represented through the sub-$\sigma$-algebra $\mathcal{H}$). In Section~\ref{se:loss}, 
we are going to look closer at the
question whether or not the covariate shift property is preserved in the relationship between
source and target distribution if the amount of information reflected by the set of covariates
is reduced. Formally, the question is phrased as follows:

\emph{Under Assumption~\ref{as:setting}, if $\mathcal{G} \subset \mathcal{H}$ is another sub-$\sigma$-algebra
of $\mathcal{A}$, does then $P[A_1\,|\,\mathcal{H}]$ $= Q[A_1\,|\,\mathcal{H}]$ imply
$P[A_1\,|\,\mathcal{G}]= Q[A_1\,|\,\mathcal{G}]$?}

\section{Covariate Shift is Fragile}
\label{se:loss}

In theory, class prior estimation under covariate shift is straightforward. Assume
that the source distribution $P$ and the target distribution $Q$ are related through 
covariate shift as defined in Section~\ref{se:more}. Then by the law of total probability and the fact
that $P[A_1\,|\,\mathcal{H}] = Q[A_1\,|\,\mathcal{H}]$, the prior class probability $Q[A_1]$ of the
positive class can be represented as
\begin{equation}\label{eq:PCC}
Q[A_1] \ = \ E_Q\bigl[P[A_1\,|\,\mathcal{H}]\bigr].
\end{equation}
As mentioned in Section~\ref{se:measure}, both $P[A_1\,|\,\mathcal{H}]$ and 
$Q[A_1\,|\,\mathcal{H}]$ typically are observable at the time when $Q[A_1]$ is to be estimated
such that $Q[A_1]$ in principle can be calculated by means of \eqref{eq:PCC}. In the literature
on class prior estimation, the approach based on \eqref{eq:PCC} is known as `probability estimation 
\& average (P \& A)' (Bella et al.~\cite{bella2010quantification}) or 
`probabilistic classify \& count (PCC)' (Gonz\'alez et al.~\cite{Gonzalez:2017:RQL:3145473.3117807}).

Unfortunately, \eqref{eq:PCC} may not work well in practice:
\begin{itemize}
\item Card and Smith~\cite{card2018importance} observed that poor calibration of the estimates of the
posterior class probabilities would entail poor results for the PCC prior probability estimates.
\item At a more fundamental level, Storkey (\cite{storkey2009training}, Section~5.1) 
pointed out that the probability masses of the covariates might be quite differently located
under the source and target distributions. As a consequence, an estimate of  $P[A_1\,|\,\mathcal{H}]$
made under the source distribution $P$ might turn out to be rather biased in those regions
of the covariate space to which the target distribution $Q$ attributes most mass.  
This problem can be mitigated by `importance weighting' which, however, may significantly complicate
the estimation procedure.
\end{itemize}
Due to these issues, it is tempting to try to avoid the potentially difficult estimation
of the posterior class probability $P[A_1\,|\,\mathcal{H}]$ which is conditioned on the full
covariate information set $\mathcal{H}$, by mimicking the simplification achieved through the
confusion matrix method (Saerens et al.~\cite{saerens2002adjusting}; also called `adjusted count'
in Forman \cite{forman2005counting}) under prior probability shift.

Adapting the confusion matrix method to covariate shift would work as follows:
Fix some hard (i.e.\ taking either the value $0$ or the value $1$) 
classifier which is a function of the covariates and therefore $\mathcal{H}$-measurable.
We can identify the classifier with an event $H \in \mathcal{H}$ which specifies 
the range of the covariates on which a positive class label is predicted.
If the source distribution $P$ and the target distribution $Q$ are related by covariate shift
for the simple information set  $\mathcal{G} = \{\emptyset, H, \Omega\setminus H, \Omega\} \subset 
\mathcal{H}$ then
the following special case of \eqref{eq:PCC} applies:
\begin{equation}
Q[A_1] \ \overset{\text{\large{?}}}{=} 
\ Q[H]\,P[A_1\,|\,H] + (1-Q[H])\,P[A_1\,|\,(\Omega\setminus H)].\label{eq:simple}
\end{equation}
Eq.~\eqref{eq:simple} appears to suggest a simple and efficient approach to class prior estimation
under covariate shift which avoids the potentially difficult problem to estimate 
$P[A_1\,|\,\mathcal{H}]$ for the more complex information set $\mathcal{H}$.

But can we always find a classifier (observable event) $H$ such that the following condition for covariate shift 
with respect to $\mathcal{G} = \{\emptyset, H, \Omega\setminus H, \Omega\}$ and,
as a consequence, also \eqref{eq:simple} hold true? 
\begin{equation}\label{eq:simpleShift}
Q[A_1\,|\,H] = P[A_1\,|\,H]\quad \text{and}\quad 
Q[A_1\,|\,(\Omega\setminus H)] = P[A_1\,|\,(\Omega\setminus H)].
\end{equation}
The question mark in \eqref{eq:simple} is meant to suggest that the answer is `no'. This
is illustrated with the following example.

\begin{example}\label{ex:Simple}
We revisit the binormal model with equal variances as 
an example that fits into the setting of Assumption~\ref{as:setting}.
The source distribution $P$ is defined by specifying 
the marginal distribution of $Y = \begin{cases}
1, & \text{on}\ A_1\\
0, & \text{on}\ A_0
\end{cases}$, with $\mathrm{P}[A_1] = p \in (0,1)$, 
     and defining the class-conditional distributions of the covariate $X$ given $Y$ 
     as normal distributions with equal variances:
\begin{equation}\label{eq:CondNormal}
    P[X \in \cdot\,|\,A_1]  = \mathcal{N}(\nu, \sigma^2) \quad \text{and}\quad
    P[X \in \cdot\,|\,A_0]  = \mathcal{N}(\mu, \sigma^2). 
\end{equation}
    In \eqref{eq:CondNormal}, we assume that $\mu < \nu$ and $\sigma > 0$. The unconditional
    distribution of $X$ then is a mixture with weight $p$ of the two normal distributions.
    
The posterior class probability $P[A_1\,|\,\mathcal{H}]$ for $\mathcal{H} = \sigma(X)$ in this setting is given by
$P[A_1\,|\,\mathcal{H}]  =  \bigl(1 + \exp(a\,X + b)\bigr)^{-1}$,
with $a = \frac{\mu-\nu}{\sigma^2} < 0$ and $b = \frac{\nu^2-\mu^2}{2\,\sigma^2} + \log\left(\frac{1-p}{p}\right)$.
For the target distribution $Q$, we only specify the marginal distribution of the covariate $X$ 
as another normal
distribution with mean $E_Q[X] = \tau$ and variance $\mathrm{var}_Q[X] = \sigma^2 + p\,(1-p)\,(\mu -\nu)^2$ such 
that the variance of $X$ under $Q$ matches the variance of $X$ under $P$.\\
Under covariate shift, then by \eqref{eq:PCC} it holds for the target prior class probability $Q[A_1]$ that
\begin{equation}\label{eq:total}
Q[A_1] \ = \ E_Q\left[\bigl(1 + \exp(a\,X + b)\bigr)^{-1}\right].
\end{equation}
To illustrate the effect of simplification as suggested by \eqref{eq:simple}, we define 
a family of classifiers $H_x \ = \ \{X > x\}$ for thresholds $x \in \mathbb{R}$. 

Figure~\ref{fig:2} shows the true target prior class probability 
$Q[A_1]$ according to \eqref{eq:total} (constant, dashed line) 
and, for moving threshold $x$, `pseudo' priors according to \eqref{eq:simple} (solid curve). 
As the pseudo priors do
not match the true prior, the covariate shift property \eqref{eq:simpleShift} 
must be violated for all information sets 
\begin{equation*}
\mathcal{G}_x = \{\emptyset, H_x, \Omega \setminus H_x, \Omega\} \subset \mathcal{H} = 
\sigma(X).
\end{equation*}
 This is due to the loss of information compared to the full information set
$\mathcal{H}$ associated with the covariate $X$. \qed
\end{example}
\begin{figure}[tb]
\begin{center}
\includegraphics[width=9cm]{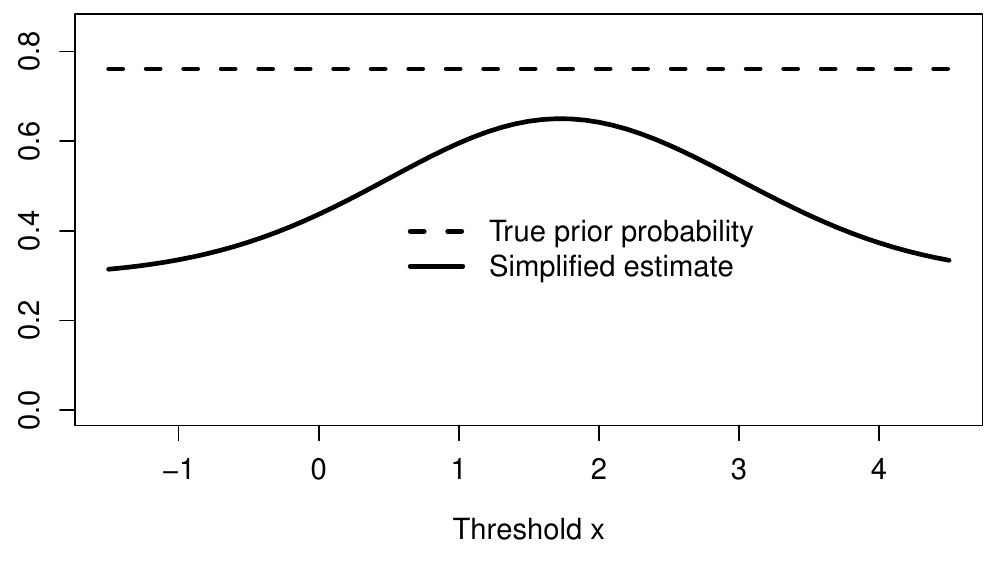}
\caption{Illustration of Example~\ref{ex:Simple}, with parameters $\mu=0$, $\nu=1.5$, 
$\sigma=1$, $p=0.3$ and $\tau=2.5$ for the normal distributions and the mixture parameter $p$
of the source distribution. Simplification of the covariate information set causes the covariate shift property
to be lost as demonstrated by the failure to hit the true target prior $Q[A_1]$ with the simplified estimates
according to \eqref{eq:simple}.
} \label{fig:2}
\end{center}
\end{figure}
On the basis of Example~\ref{ex:Simple}, we can conclude that
under Assumption~\ref{as:setting}, if $\mathcal{G} \subset \mathcal{H}$ is another sub-$\sigma$-algebra
of $\mathcal{A}$, then $P[A_1\,|\,\mathcal{H}] = Q[A_1\,|\,\mathcal{H}]$ does not always imply
$P[A_1\,|\,\mathcal{G}]= Q[A_1\,|\,\mathcal{G}]$, i.e.\ the covariate property may get lost if
the amount of information represented by the covariates is reduced.

Information loss and subsequent loss of the covariate shift property can also be the consequence of
deploying `domain-invariant representations' (Johansson et al.~\cite{pmlr-v89-johansson19a}, Section~4.1).

\section{Covariate Shift and Statistical Sufficiency}
\label{se:Main}

In the following we will identify sufficient and necessary conditions for simplifications 
of covariate shift like
\eqref{eq:simpleShift} to hold. We will see that indeed it is almost impossible for 
\eqref{eq:simpleShift} to be true if the information set $\mathcal{H}$ of Assumption~\ref{as:setting}
is large compared to the information set $\mathcal{G}$ on which \eqref{eq:simpleShift} is based.
\begin{definition}\label{de:CA}
Under Assumption~\ref{as:setting}, 
denote by $C_A(P, \mathcal{H})$ the set of 
all probability measures $Q$ on $(\Omega, \mathcal{A})$ such that $P$ and $Q$ are related 
by covariate shift, i.e.
\begin{equation*}
C_A(P, \mathcal{H}) \ = \ \bigl\{Q\ \text{probability measure on}\ (\Omega, \mathcal{A}): \
    P[A_1\,|\,\mathcal{H}] = Q[A_1\,|\,\mathcal{H}]\bigr\}.
\end{equation*}
Denote by $C^\ast_A(P, \mathcal{H})$ the set of 
all probability measures $Q$ on $(\Omega, \mathcal{A})$ such that $P$ and $Q$ are related 
by covariate shift and $Q$ is absolutely continuous\footnote{%
$Q$ is absolutely continuous with respect to $P$ on $\mathcal{H}$ (expressed symbolically
as $Q|\mathcal{H}\ll P|\mathcal{H}$) if $P[N] = 0$ for $N\in\mathcal{H}$ implies
$Q[N] = 0$. By the Radon-Nikodym theorem (Klenke \cite{klenke2013probability}, Corollary~7.34) 
then there exists an $\mathcal{H}$-measurable 
non-negative function $h$ such that $Q[H] = E_P[h\,\mathbf{1}_H]$ for all
$H\in\mathcal{H}$. The function $h$ is called density of $Q$ with respect to $P$.} 
with respect to $P$ on $\mathcal{H}$, i.e.
\begin{equation*}
C^\ast_A(P, \mathcal{H}) \ = \ \bigl\{Q\in C_A(P, \mathcal{H}): Q|\mathcal{H}\ll P|\mathcal{H}\bigr\}.
\end{equation*}
\end{definition}
The following example illustrates the definition of $C_A(P, \mathcal{H})$ in some simple
special cases.
\begin{example}\label{ex:counter}
Consider the following three special cases for $\mathcal{H}$:
\begin{enumerate}
\item[(i)]\label{it:i} If $A_1 \in \mathcal{H}$  then we have that 
$\mathcal{H} \supset \sigma(\{A\})$ and 
$$C_A(P, \mathcal{H})\ =\ 
\{\text{All probability measures on}\ (\Omega, \mathcal{A})\},$$ 
because in this case it holds that
$P[A_1\,|\,\mathcal{H}] = \mathbf{1}_{A_1}  =  Q[A_1\,|\,\mathcal{H}]$.
\item[(ii)] If $A_1$ and $\mathcal{H}$ are independent under $P$ and $Q$, it follows that
\begin{equation*}
P[A_1\,|\,\mathcal{H}]\ =\ P[A_1] \quad \text{and}\quad Q[A_1\,|\,\mathcal{H}]\ =\ Q[A_1].
\end{equation*}
Hence we have $Q \in C_A(P, \mathcal{H})$  if and only if $P[A_1] = Q[A_1]$.
\item[(iii)] If $\mathcal{H} = \{\emptyset, \Omega\}$ we are in a special case of (ii).
This implies 
\begin{equation*}
C_A(P, \mathcal{H}) \ = \ \bigl\{Q\ \text{probability measure on}\ (\Omega, \mathcal{A}): \
    P[A_1] = Q[A_1]\bigr\}. \qquad \qed
\end{equation*}
\end{enumerate}
\end{example}
Note that in case~(i) of Example~\ref{ex:counter}, $P[A_1] \neq Q[A_1]$ is possible.

\begin{remark}\label{rm:exclude}
The set $C_A(P, \mathcal{H})$ contains all probability measures $Q$ with the property that
there is an event $\Omega_Q \in \mathcal{H}$ such that $Q[\Omega_Q] =1$ and  $P[\Omega_Q] = 0$, i.e.\
$Q$ and $P$ are mutually singular. Although in this case there is an $\mathcal{H}$-measurable random variable
$\Psi$ that is both a version of $P[A_1\,|\,\mathcal{H}]$ and of $Q[A_1\,|\,\mathcal{H}]$, it is impossible
to completely learn $\Psi$ from the source distribution $P$ because no instances $\omega \in \Omega_Q$
can be sampled under $P$ due to $P[\Omega_Q] = 0$. Hence the distributions $Q$ which
are singular to $P$ are not of great theoretical interest. 
$C_A(P, \mathcal{H})$ may also contain probability measures $Q$ with both absolutely continuous and singular
components (with respect to $P$). In this case, it is not possible to completely learn $\Psi$ from $P$ either.
Therefore, in the following the focus is on the distributions $Q$ that are absolutely continuous with respect to $P$. 
\qed
\end{remark}

At first glance, one might guess that the tower property of conditional expectations 
(Klenke \cite{klenke2013probability}, Theorem~8.14) implies
$C_A(P, \mathcal{H}) \subset C_A(P, \mathcal{G})$ if $\mathcal{G}$ is a sub-$\sigma$-algebra of $\mathcal{H}$.
However, the following example shows that this is not true in general.

\begin{example}\label{ex:ind}
Assume that $\mathcal{H} = \sigma(\mathcal{F} \cup \mathcal{G})$ for sub-$\sigma$-algebras $\mathcal{F}$, 
$\mathcal{G}$ of $\mathcal{A}$, with $A_1\notin \mathcal{H}$. Assume further
that $\mathcal{G}$ and $\sigma\bigl(\sigma(\{A_1\}) \cup \mathcal{F}\bigr)$ are independent 
under $P$ and $Q$. Then it follows that
$P[A_1\,|\,\mathcal{H}]  = P[A_1\,|\,\mathcal{F}]$ and
$Q[A_1\,|\,\mathcal{H}]  = Q[A_1\,|\,\mathcal{F}]$.

Hence we have $Q \in C_A(P, \mathcal{H})$ if and only if $Q\in C_A(P, \mathcal{F})$. By case~(ii) of
Example~\ref{ex:counter}, we have $Q \in C_A(P, \mathcal{G})$ if and only if $P[A_1] = Q[A_1]$. Hence,
if there is a $Q\in C_A(P, \mathcal{F})$ with $P[A_1] \neq Q[A_1]$, we have an example showing
that $C_A(P, \mathcal{H}) \not\subset C_A(P, \mathcal{G})$ may happen despite $\mathcal{G} \subset \mathcal{H}$. 
\qed
\end{example}
Example~\ref{ex:ind} demonstrates that the covariate shift property may get lost if components of the covariates
are dropped. We continue with presenting sufficient criteria for covariate shift (Lemma~\ref{le:basic})
and inheritance of covariate shift (Proposition~\ref{pr:inherit} below). 

\begin{lemma}\label{le:basic}
Under Assumption~\ref{as:setting}, 
assume further that $Q$ is absolutely continuous with respect to $P$ on $\mathcal{H}_A$ and that
there is an $\mathcal{H}$-measurable density $h$ of $Q|\mathcal{H}_A$ with respect to $P|\mathcal{H}_A$.
Then it follows that $Q \in C^\ast_A(P, \mathcal{H})$.
\end{lemma}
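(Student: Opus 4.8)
The plan is to unwind the two defining properties of $C^\ast_A(P,\mathcal{H})$ separately. First I would observe that absolute continuity of $Q|\mathcal{H}$ with respect to $P|\mathcal{H}$ is essentially free: since $\mathcal{H}\subset\mathcal{H}_A$ and the hypothesis provides an $\mathcal{H}$-measurable $h$ with $Q[H]=E_P[h\,\mathbf{1}_H]$ for every $H\in\mathcal{H}_A$, in particular for every $H\in\mathcal{H}$, the function $h$ is already a density of $Q|\mathcal{H}$ with respect to $P|\mathcal{H}$; hence $P[N]=0$ for $N\in\mathcal{H}$ forces $Q[N]=0$. So the real work is to show $Q\in C_A(P,\mathcal{H})$, i.e.\ that some version of $P[A_1\,|\,\mathcal{H}]$ is also a version of $Q[A_1\,|\,\mathcal{H}]$.

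Next I would fix an $\mathcal{H}$-measurable $[0,1]$-valued function $\Psi$ that is a version of $P[A_1\,|\,\mathcal{H}]=E_P[\mathbf{1}_{A_1}\,|\,\mathcal{H}]$ and verify the characterising identity $Q[A_1\cap H]=E_Q[\Psi\,\mathbf{1}_H]$ for every $H\in\mathcal{H}$. The chain has three steps. (a) $A_1\cap H\in\mathcal{H}_A$ — immediate from \eqref{eq:HA} with $H_1=H$, $H_0=\emptyset$ — so the density hypothesis gives $Q[A_1\cap H]=E_P[h\,\mathbf{1}_{A_1}\mathbf{1}_H]$. (b) Since $h\,\mathbf{1}_H$ is $\mathcal{H}$-measurable and bounded, pull it out of a conditioning on $\mathcal{H}$: $E_P[h\,\mathbf{1}_H\,\mathbf{1}_{A_1}]=E_P\bigl[h\,\mathbf{1}_H\,E_P[\mathbf{1}_{A_1}\,|\,\mathcal{H}]\bigr]=E_P[h\,\mathbf{1}_H\,\Psi]$. (c) Since $\Psi\,\mathbf{1}_H$ is a bounded $\mathcal{H}$-measurable function and $h$ is (by the first step) the density of $Q|\mathcal{H}$ with respect to $P|\mathcal{H}$, extend the defining relation $Q[H]=E_P[h\,\mathbf{1}_H]$ from indicators to this function, obtaining $E_P[h\,\Psi\,\mathbf{1}_H]=E_Q[\Psi\,\mathbf{1}_H]$. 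Stringing (a)--(c) together yields $Q[A_1\cap H]=E_Q[\Psi\,\mathbf{1}_H]$ for all $H\in\mathcal{H}$, which is exactly the statement that $\Psi$ is a version of $Q[A_1\,|\,\mathcal{H}]$; together with the first step this gives $Q\in C^\ast_A(P,\mathcal{H})$.

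I do not expect a genuine obstacle here — the argument is short — but the point that has to be handled with care is the role of the $\mathcal{H}$-measurability of $h$: it is precisely what allows the pull-out in step~(b), and it is also what makes $h$ simultaneously a density on $\mathcal{H}_A$ and on $\mathcal{H}$, used in step~(c). If $h$ were only $\mathcal{H}_A$-measurable the argument would collapse, which is consistent with the examples of Section~\ref{se:loss} showing that covariate shift is not automatically inherited. The only other routine item is justifying the standard extension of $Q[H]=E_P[h\,\mathbf{1}_H]$ from $H\in\mathcal{H}$ to bounded $\mathcal{H}$-measurable integrands, which follows by the usual linearity and monotone-convergence argument.
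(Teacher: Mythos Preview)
Your proposal is correct and follows essentially the same chain of equalities as the paper's proof, just traversed in the opposite direction, with the added explicit check that $Q|\mathcal{H}\ll P|\mathcal{H}$ (which the paper leaves implicit). One small slip: $h\,\mathbf{1}_H$ need not be bounded since densities can be unbounded, but the pull-out in step~(b) is still valid because all factors are nonnegative (or, alternatively, because $\mathbf{1}_{A_1}\mathbf{1}_H$ is bounded and $E_P[h]=1<\infty$).
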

\begin{proof}
Fix any $H\in \mathcal{H}$. Then we obtain that
\begin{multline*}
E_Q\bigl[\mathbf{1}_H\,P[A_1\,|\,\mathcal{H}]\bigr]  = E_P\bigl[h\,\mathbf{1}_H\,P[A_1\,|\,\mathcal{H}]\bigr]\\
     = E_P\bigl[E_P[h\,\mathbf{1}_{A_1\cap H}\,|\,\mathcal{H}]\bigr]
     = E_P[h\,\mathbf{1}_{A_1\cap H}]
     = Q[A_1 \cap  H].
\end{multline*}
This implies $P[A_1\,|\,\mathcal{H}] = Q[A_1\,|\,\mathcal{H}]$. \qed
\end{proof}

We are now going to point out connections between the notion of
covariate shift and the following two concepts that have been considered in the literature in
other contexts:
\begin{itemize}
\item Covariate shift with posterior drift (Scott \cite{Scott2019}): Under Assumption~\ref{as:setting}, 
dataset shift between $P$ and $Q$
is more specifically called \emph{covariate shift with posterior drift} if there is 
an increasing function $f:[0,1] \to \mathbb{R}$ such that it holds that
\begin{equation}\label{eq:CSPD}
Q[A_1\,|\,\mathcal{H}] \ = \ f\bigl(P[A_1\,|\,\mathcal{H}]\bigr).
\end{equation}
\item Sufficiency \cite{devroye1996probabilistic,adragni2009sufficient,Tasche2022}:
Under Assumption~\ref{as:setting}, if
$\mathcal{G} \subset \mathcal{H}$ is another sub-$\sigma$-algebra of $\mathcal{A}$ then $\mathcal{G}$ is
called (statistically) \emph{sufficient} for $\mathcal{H}$ with respect to $A_1$ 
if $P[A_1\,|\,\mathcal{G}] = P[A_1\,|\,\mathcal{H}]$ holds true.
\end{itemize}

\begin{proposition}\label{pr:inherit}
Under Assumption~\ref{as:setting}, let $P$ and $Q$ be related by covariate shift with posterior drift
such that \eqref{eq:CSPD} holds for an increasing function $f$. Assume that 
$\mathcal{G}$ is sufficient for $\mathcal{H}$ with respect to $A_1$ under the source
distribution $P$ and that $Q$ is absolutely
continuous with respect to $P$ on $\mathcal{H}$. Then 
$Q[A_1\,|\,\mathcal{G}]  =  f\bigl(P[A_1\,|\,\mathcal{G}]\bigr)$ follows.
\end{proposition}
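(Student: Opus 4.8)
The plan is to reduce the claim about $\mathcal{G}$ to the covariate-shift-with-posterior-drift identity \eqref{eq:CSPD} that we already have on $\mathcal{H}$, using sufficiency to move between the two $\sigma$-algebras and the tower property to handle the conditional expectations. The key observation is that sufficiency gives us $P[A_1\,|\,\mathcal{G}] = P[A_1\,|\,\mathcal{H}]$ as a version, and since the right-hand side is $\mathcal{H}$-measurable while the left-hand side is $\mathcal{G}$-measurable, both are in fact (versions of) a $\mathcal{G}$-measurable random variable. Applying the increasing function $f$ to this common version, $f\bigl(P[A_1\,|\,\mathcal{G}]\bigr) = f\bigl(P[A_1\,|\,\mathcal{H}]\bigr) = Q[A_1\,|\,\mathcal{H}]$ $P$-a.s., and hence $Q$-a.s. by absolute continuity of $Q$ with respect to $P$ on $\mathcal{H}$.

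So the task reduces to showing that this $\mathcal{G}$-measurable random variable, call it $\Psi := f\bigl(P[A_1\,|\,\mathcal{G}]\bigr) = Q[A_1\,|\,\mathcal{H}]$, is actually a version of $Q[A_1\,|\,\mathcal{G}]$. By definition of conditional expectation it suffices to verify that $Q[A_1 \cap G] = E_Q[\Psi\,\mathbf{1}_G]$ for every $G \in \mathcal{G}$. First I would compute
\begin{equation*}
E_Q[\Psi\,\mathbf{1}_G] = E_Q\bigl[\mathbf{1}_G\,Q[A_1\,|\,\mathcal{H}]\bigr] = E_Q\bigl[E_Q[\mathbf{1}_{A_1 \cap G}\,|\,\mathcal{H}]\bigr] = Q[A_1 \cap G],
\end{equation*}
where the middle step uses that $G \in \mathcal{G} \subset \mathcal{H}$ so $\mathbf{1}_G$ can be pulled inside the conditional expectation, and the last step is the tower property under $Q$. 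Since $\Psi$ is $\mathcal{G}$-measurable and satisfies the defining equation, $\Psi$ is a version of $Q[A_1\,|\,\mathcal{G}]$, which is exactly the asserted identity $Q[A_1\,|\,\mathcal{G}] = f\bigl(P[A_1\,|\,\mathcal{G}]\bigr)$.

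The step requiring the most care is the handling of almost-sure equalities across the two measures: sufficiency gives $P[A_1\,|\,\mathcal{G}] = P[A_1\,|\,\mathcal{H}]$ only $P$-a.s., and \eqref{eq:CSPD} holds $Q$-a.s., so one must check that the chain $f(P[A_1\,|\,\mathcal{G}]) = f(P[A_1\,|\,\mathcal{H}]) = Q[A_1\,|\,\mathcal{H}]$ is valid $Q$-a.s.\ — this is precisely where the hypothesis $Q|\mathcal{H} \ll P|\mathcal{H}$ is used, to transfer the $P$-a.s.\ identity from sufficiency into a $Q$-a.s.\ identity (note both $f(P[A_1\,|\,\mathcal{G}])$ and $f(P[A_1\,|\,\mathcal{H}])$ are $\mathcal{H}$-measurable, so their $P$-a.s.\ equality on $\mathcal{H}$ implies $Q$-a.s.\ equality). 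One should also note the monotonicity of $f$ is not actually needed for this direction — it only serves to make \eqref{eq:CSPD} a meaningful structural constraint elsewhere — so the proof goes through for any measurable $f$; I would mention this but not belabor it.
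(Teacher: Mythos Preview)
Your proof is correct and follows essentially the same route as the paper: both verify that $f\bigl(P[A_1\,|\,\mathcal{G}]\bigr)$ satisfies the defining property of $Q[A_1\,|\,\mathcal{G}]$ by replacing $P[A_1\,|\,\mathcal{G}]$ with $P[A_1\,|\,\mathcal{H}]$ via sufficiency, invoking \eqref{eq:CSPD}, and applying the tower property under $Q$. The only cosmetic difference is that the paper introduces the Radon--Nikodym density $h$ explicitly to pass from $E_Q$ to $E_P$ and back, whereas you use absolute continuity directly to transfer the $P$-a.s.\ identity from sufficiency into a $Q$-a.s.\ identity; these are equivalent formulations of the same step, and your remark that monotonicity of $f$ plays no role is accurate.
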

\begin{proof}
Let $h \ge 0$ be a density of $Q$ with respect to $P$ on $\mathcal{H}$. Then, in particular, $h$ is
$\mathcal{H}$-measurable. For any $G\in \mathcal{G}$, we therefore obtain
\begin{multline*}
E_Q\bigl[\mathbf{1}_G\,f\bigl(P[A_1\,|\,\mathcal{G}]\bigr)\bigr]  = 
    E_P\bigl[h\,\mathbf{1}_G\,f\bigl(P[A_1\,|\,\mathcal{G}]\bigr)\bigr]
    = E_P\bigl[h\,\mathbf{1}_G\,f\bigl(P[A_1\,|\,\mathcal{H}]\bigr)\bigr]\\
    = E_Q\bigl[\mathbf{1}_G\,f\bigl(P[A_1\,|\,\mathcal{H}]\bigr)\bigr]
    = E_Q\bigl[\mathbf{1}_G\,Q[A_1\,|\,\mathcal{H}]\bigr] 
    = Q[A_1 \cap G].
\end{multline*}
This implies the assertion.\qed
\end{proof}

Based on Lemma~\ref{le:basic} and Proposition~\ref{pr:inherit}, we are in a position to prove
the main result of this paper. It states that an information subset inherits the covariate shift property
from its information superset for all absolutely continuous target distributions if and only if the subset is
statistically sufficient for the superset with
respect to the positive class label under the source distribution. 
\begin{theorem}\label{th:main}
Under Assumption~\ref{as:setting}, let $\mathcal{G} \subset \mathcal{H}$ be another sub-$\sigma$-algebra
of $\mathcal{A}$. Then $\mathcal{G}$ is sufficient for $\mathcal{H}$ with respect to $A_1$ 
under the source distribution $P$ if and only
if  $C_A^\ast(P, \mathcal{H}) \subset C_A^\ast(P, \mathcal{G})$ holds true.
\end{theorem}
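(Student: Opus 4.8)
The plan is to prove the two implications separately, leaning on Lemma~\ref{le:basic}, Proposition~\ref{pr:inherit}, and the observation that plain covariate shift is the special case of covariate shift with posterior drift where $f$ is the identity map. For the ``only if'' direction I would assume $\mathcal{G}$ is sufficient for $\mathcal{H}$ with respect to $A_1$ and take an arbitrary $Q \in C_A^\ast(P,\mathcal{H})$. Since $Q$ is related to $P$ by covariate shift over $\mathcal{H}$ and $Q|\mathcal{H} \ll P|\mathcal{H}$, Proposition~\ref{pr:inherit} applied with $f = \mathrm{id}$ yields $Q[A_1\,|\,\mathcal{G}] = P[A_1\,|\,\mathcal{G}]$, i.e.\ $P$ and $Q$ are related by covariate shift over $\mathcal{G}$; and because $\mathcal{G} \subset \mathcal{H}$, absolute continuity on $\mathcal{H}$ restricts to absolute continuity on $\mathcal{G}$, so $Q \in C_A^\ast(P,\mathcal{G})$. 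Hence $C_A^\ast(P,\mathcal{H}) \subset C_A^\ast(P,\mathcal{G})$.

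The ``if'' direction is the substantive part, and I expect it to be the main obstacle: given the inclusion $C_A^\ast(P,\mathcal{H}) \subset C_A^\ast(P,\mathcal{G})$, I need to manufacture enough distributions $Q$ inside $C_A^\ast(P,\mathcal{H})$ to force $P[A_1\,|\,\mathcal{G}] = P[A_1\,|\,\mathcal{H}]$. The natural device is Lemma~\ref{le:basic}: for any bounded non-negative $\mathcal{H}$-measurable function $h$ with $E_P[h] = 1$, the measure $Q$ defined by $dQ = h\,dP$ on $\mathcal{H}_A$ has an $\mathcal{H}$-measurable density, so $Q \in C_A^\ast(P,\mathcal{H})$. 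By hypothesis this $Q$ then lies in $C_A^\ast(P,\mathcal{G})$, which by the argument above (Proposition~\ref{pr:inherit} with $f = \mathrm{id}$) means $Q[A_1\,|\,\mathcal{G}] = P[A_1\,|\,\mathcal{G}]$. Writing out what $Q[A_1\,|\,\mathcal{G}] = P[A_1\,|\,\mathcal{G}]$ says in terms of $P$-expectations: for every $G \in \mathcal{G}$,
\[
E_P\bigl[h\,\mathbf{1}_G\,P[A_1\,|\,\mathcal{H}]\bigr] = Q[A_1 \cap G] = E_Q\bigl[\mathbf{1}_G\,P[A_1\,|\,\mathcal{G}]\bigr] = E_P\bigl[h\,\mathbf{1}_G\,P[A_1\,|\,\mathcal{G}]\bigr].
\]
So the $\mathcal{H}$-measurable function $P[A_1\,|\,\mathcal{H}] - P[A_1\,|\,\mathcal{G}]$ integrates to zero against $h\,\mathbf{1}_G$ for every bounded non-negative $\mathcal{H}$-measurable $h$ and every $G \in \mathcal{G}$.

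To finish, I would argue that this family of test functions is rich enough to conclude $P[A_1\,|\,\mathcal{H}] = P[A_1\,|\,\mathcal{G}]$ $P$-almost surely. Taking $G = \Omega$ and letting $h$ range over bounded non-negative $\mathcal{H}$-measurable functions (which is enough by the usual splitting into positive and negative parts, since differences of such $h$ span all bounded $\mathcal{H}$-measurable functions) forces $E_P\bigl[g \cdot (P[A_1\,|\,\mathcal{H}] - P[A_1\,|\,\mathcal{G}])\bigr] = 0$ for every bounded $\mathcal{H}$-measurable $g$; choosing $g = \mathbf{1}\{P[A_1\,|\,\mathcal{H}] - P[A_1\,|\,\mathcal{G}] > 0\}$ and its complement gives the $P$-a.s.\ identity, which is exactly sufficiency of $\mathcal{G}$ for $\mathcal{H}$ with respect to $A_1$. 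The one point needing care is the normalization $E_P[h] = 1$ required for $Q$ to be a probability measure: this is harmless because $h$ can always be rescaled, and rescaling does not affect the displayed integral identity; alternatively one restricts to bounded $h$ with $\|h\|_\infty$ controlled so that $h/E_P[h]$ stays bounded. I would also note explicitly that $h$ being $\mathcal{H}$-measurable (not merely $\mathcal{H}_A$-measurable) is what makes Lemma~\ref{le:basic} applicable, and that this is automatic here since we build $h$ by hand.
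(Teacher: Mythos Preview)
Your proof is correct and follows essentially the same strategy as the paper: both directions rest on Proposition~\ref{pr:inherit} for ``only if'' and on Lemma~\ref{le:basic} (change of measure with an $\mathcal{H}$-measurable density) for ``if''. The only difference is packaging of the final step: the paper takes the specific densities $h = \mathbf{1}_H/P[H]$ for $H\in\mathcal{H}$ with $P[H]>0$, which immediately yields $E_P\bigl[\mathbf{1}_H\,P[A_1\,|\,\mathcal{G}]\bigr] = P[A_1\cap H]$ and hence sufficiency by the very definition of conditional expectation, whereas you carry a general bounded $h$, derive the orthogonality relation $E_P\bigl[h\,(P[A_1\,|\,\mathcal{H}]-P[A_1\,|\,\mathcal{G}])\bigr]=0$, and then specialise $h$ to indicators at the end. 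Your route is slightly longer but equivalent; one small cleanup is that the parenthetical ``by the argument above (Proposition~\ref{pr:inherit} with $f=\mathrm{id}$)'' is unnecessary, since $Q\in C_A^\ast(P,\mathcal{G})$ already \emph{means} $Q[A_1\,|\,\mathcal{G}]=P[A_1\,|\,\mathcal{G}]$ by definition.
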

\begin{proof}
The `only if' part of the assertion is implied by Proposition~\ref{pr:inherit}. By the definition 
of conditional probability, for the `if' part 
we have to show that for each $H\in\mathcal{H}$ it holds that 
$P[A_1\cap H ] = E_P\bigl[\mathbf{1}_H\,P[A_1\,|\,\mathcal{G}]\bigr]$. This is obvious for $H$ with
$P[H] = 0$. Hence fix an event $H\in \mathcal{H}$ and assume $P[H] > 0$.

Define the probability measure $Q_H$ on $(\Omega, \mathcal{A})$ as $P$ conditional on $H$, i.e.
\begin{equation*}
Q_H[M] \ = \ P[M\,|\,H] \ = \ \frac{P[M\cap H]}{P[H]}, \quad \text{for all}\ M \in \mathcal{A}.
\end{equation*}
This $Q_H$ is absolutely continuous with respect to $P$ on $\mathcal{A} \supset \mathcal{H_A}$,
with $\mathcal{H}$-measurable density $\frac{\mathbf{1}_H}{P[H]}$. Hence, by Lemma~\ref{le:basic} we
obtain $Q_H \in C^\ast_A(P, \mathcal{H})$. By assumption, this implies 
$Q_H \in C^\ast_A(P, \mathcal{G})$, and in particular $P[A_1\,|\,\mathcal{G}] = Q_H[A_1\,|\,\mathcal{G}]$.
From this, it follows that
\begin{multline*}
E_P\bigl[\mathbf{1}_H\,P[A_1\,|\,\mathcal{G}]\bigr]  = 
    P[H]\,E_{Q_H}\bigl[P[A_1\,|\,\mathcal{G}]\bigr]\\
    = P[H]\,E_{Q_H}\bigl[Q_H[A_1\,|\,\mathcal{G}]\bigr]
     = P[H]\,Q_H[A_1] 
     = P[A_1\cap H].
\end{multline*}
This completes the proof.\qed
\end{proof}

\section{Discussion of Theorem~\ref{th:main}}
\label{se:Discuss}

Can sufficiency of $\mathcal{G}$ for $\mathcal{H}$ with respect to $A_1$ be characterised in
other ways than just requiring $P[A_1\,|\,\mathcal{G}] = P[A_1\,|\,\mathcal{H}]$?
\begin{itemize}
\item As observed by Devroye et al.~\cite{devroye1996probabilistic} (Section~32), if $\mathcal{G} = \sigma(T)$
is generated by some random variable $T$, then $\mathcal{G}$ is sufficient for $\mathcal{H}$ if and only if
there exists a measurable function $g$ such that $P[A_1\,|\,\mathcal{H}] = g(T)$.
\item Primary examples for such $T$ are transformations $T = f(P[A_1\,|\,\mathcal{H}])$ of the
posterior class probability which 
may emerge as scoring classifiers optimising the area under the Receiver Operating Characteristic (ROC)
or the area under the Brier curve (Tasche \cite{Tasche2022}, Section~5.3). The process to 
reengineer $P[A_1\,|\,\mathcal{H}]$ from $T$ is called `calibration' (see Kull et 
al.~\cite{kull2017betacalibration} and the references therein).
\end{itemize}

Johansson et al.~\cite{pmlr-v89-johansson19a} wrote in Section~4.1: 
``One interpretation $\ldots$ is that covariate
shift ([their] Assumption 1) need not hold with respect to the
representation $Z = \phi(X)$, even if it does with respect
to $X$. With $\phi^{-1}(z) = \{x : \phi(x) = z\}$,
\begin{equation}\label{eq:Johan}
p_t(Y\,|\,z)\ =\ \frac{\int_{x\in\phi^{-1}(z)} p_t(Y\,|\,x)\,p_t(x)\,dx}%
	{\int_{x\in\phi^{-1}(z)} p_t(x)\,dx}\ \neq\ p_s(Y\,|\,z).
\end{equation}
Equality holds for general $p_s$, $p_t$ only if $\phi$ is invertible.'' According to Section~2 of
Johansson et al., $p_s$ and $p_t$ stand for the densities of the covariate $X$ on the `source domain'
and `target domain' respectively. By Theorem~\ref{th:main}, with $\mathcal{G} = \sigma(Z)$, actually
covariate shift holds under the transformation $\phi$ if $\mathcal{G}$ is sufficient for $\mathcal{H} = \sigma(X)$
(in the setting of Johansson et al.). Sufficiency of $\mathcal{G}$ is implied by invertibility of $\phi$. Hence,
Theorem~\ref{th:main} is a more general statement than the one 
by Johansson et al.~\cite{pmlr-v89-johansson19a}.\footnote{%
The derivation of \eqref{eq:Johan} in \cite{pmlr-v89-johansson19a} is somewhat sloppy. 
In Section~2.3 of \cite{pmlr-v89-johansson19a}, the assumption is made for $Z = \phi(X)$ that
`$p(Z)$' is a density. This implies $\int_{x\in\phi^{-1}(z)} p_t(x)\,dx = 0$  which means
that the denominator of the fraction in \eqref{eq:Johan} is zero.}

Under Assumption~\ref{as:setting}, a mapping (representation) 
$T: (\Omega, \mathcal{H}) \to (\Omega_T, \mathcal{H}_T)$ which
is $\mathcal{H}_T$-$\mathcal{H}$-measurable is said to have  `invariant components'
(Gong et al.~\cite{pmlr-v48-gong16}) 
if its distributions under the source and target distributions are the same, i.e.\ if
\begin{equation}\label{eq:invariant}
P[T \in M] \ =\ Q[T \in M], \qquad \text{for all}\ M \in \mathcal{H}_T.
\end{equation}
As $\mathcal{H}$ reflects the covariates, $T$ can be interpreted as a transformation of the covariates that
makes their distributions undistinguishable under the source and target distributions.
As Gong et al.~\cite{pmlr-v48-gong16} noted, \eqref{eq:invariant} alone does not imply
that the posterior probabilities under source and target distributions are the same or at least
similar. He et al.~\cite{he2022domain} therefore defined the notion of `domain invariance' by 
\begin{subequations}
\begin{gather}
P[A_1\,|\,\sigma(T)] = P[A_1\,|\,\mathcal{H}],\quad 
Q[A_1\,|\,\sigma(T)] = Q[A_1\,|\,\mathcal{H}], \quad \text{and} \label{eq:HeEtAl}\\
P[A_i \cap \{T \in M\}] = Q[A_i \cap \{T \in M\}], \quad i \in \{0, 1\}, M \in \mathcal{H}_T.\label{eq:dist}
\end{gather}
\end{subequations}
He et al.~\cite{he2022domain} then observed that \eqref{eq:HeEtAl} and \eqref{eq:dist} together imply
covariate shift with respect to the information set $\mathcal{H}$, i.e.\
$P[A_1\,|\,\mathcal{H}] = Q[A_1\,|\,\mathcal{H}]$.\footnote{%
Actually, \eqref{eq:dist} implies covariate shift with respect to $\sigma(T)$. From this, together with
\eqref{eq:HeEtAl}, follows covariate shift with respect to $\mathcal{H}$. Hence the assumption of 
\eqref{eq:dist} could be replaced by the weaker assumption of having covariate shift with respect to $\sigma(T)$.}
In a sense, the observation by He et al.\ can be considered complementary to Theorem~\ref{th:main} 
because Theorem~\ref{th:main} is about passing on covariate shift from a larger information set
to a smaller one while the observation by He et al.\ is a statement about covariate shift on a smaller
information set implying covariate shift on a larger one.

In unsupervised domain adaptation, the case of source and target distributions where part or all of the
support of the target distribution is not covered by the support of the source distribution
is of great interest \cite{Ben-David2007Representations,pmlr-v89-johansson19a}. In that case,
the target distribution is at least partially singular to the source distribution. Has Theorem~\ref{th:main}
any relevance for this situation? Arguably, representations of the covariates which do not work even
in the plain-vanilla environment of target distributions which are absolutely continuous with respect
to the source distribution, are rather questionable. Hence Theorem~\ref{th:main} may be considered
useful for providing a kind of `fatal flaw' test for representations.

There are situations when covariate shift for a given sub-$\sigma$-algebra $\mathcal{G}$ can be
forced. The most important example of such a situation is sample selection 
(Hein \cite{Hein2009binary}, `Class-Conditional Independent Selection'). Theorem~\ref{th:main} may not 
be relevant then.

However, if the rationale for the assumption of covariate shift is based on causality
considerations (like e.g.\ in Storkey \cite{storkey2009training}), 
the set of covariates associated to the information set $\mathcal{H}$ in
the definition of covariate shift might turn out to be quite large, rendering tedious the task of
estimating the posterior $P[A_1\,|\,\mathcal{H}]$. Theorem~\ref{th:main} provides the condition
under which the size (or dimension) of the set of covariates may be reduced without destroying the
invariance of the posterior class probabilities between the source and arbitrary target distributions. 
This condition does not require any special properties of the
target distributions $Q$ but the harmless requirement of being absolutely continuous with respect 
to the source distribution $P$. Note however that Theorem~\ref{th:main} leaves open the possibility that
the covariate shift property is inherited by a non-sufficient sub-$\sigma$-algebra for some (but not all)
specific target distributions.

If the set of covariates generating $\mathcal{H}$ contains at least one real-valued covariate
which has a Lebesgue-density and  is not independent of $A_1$, then there is no sufficient four-elements 
sub-$\sigma$-algebra $\mathcal{G}$ such that \eqref{eq:simpleShift} holds. For
sufficieny would imply
that the range of the posterior class probability $P[A_1\,|\,\mathcal{H}]$ 
consists of two values only -- which is wrong for probabilities
conditional on continuous random variables. Hence by Theorem~\ref{th:main}
no radically simple approach to class prior 
estimation like \eqref{eq:simple} that would be applicable
under all possible shifts of the covariate distribution is available in this case.

\section{Probing for class prior estimation under covariate shift}
\label{se:Probing}

To the author's best knowledge, there is basically one approach to class prior estimation on the target
dataset under covariate shift: Estimate the posterior probability of the positive class
as a function of the covariates
on the source dataset and then calculate its average
on the target dataset, see \eqref{eq:PCC}. Card and Smith \cite{card2018importance} discuss 
two variants of this approach, one of them 
with and the other without proper calibration of the posterior probabilities -- hence the concept
in principle is the same in both variants.

Under prior probability shift, the simple `confusion matrix method' 
can be deployed to achieve consistent class prior estimates \cite{buck1966comparison,saerens2002adjusting}.
As seen in Sections~\ref{se:loss} and \ref{se:Main}, no similarly simple approach based on merely 
making use of one classifier's output works 
under covariate shift. However, averaging the counting results of a large ensemble of classifiers
trained for a variety of cost-sensitive classification problems would work 
(`probing': Langford and Zadrozny \cite{langford2005estimating}; Tasche \cite{Tasche2022}). 
\subsubsection{Sketch of class prior estimation with probing.}
Define the cost-sensitive (weighted) classification loss (with $0\le t \le 1$) in the
setting of Assumption~\ref{as:setting}: 
\begin{equation*}
L(H,t) \ = \ (1-t)\,P[A_1 \cap (\Omega \setminus H)] + t\,P[A_0\cap H], \quad H \in \mathcal{H}.
\end{equation*}
The probing algorithm adapted to class prior estimation then can be described as follows:
\begin{itemize}
\item[1)] Choose an appropriately `dense' set $0 = t_0 < t_1 < t_2 < \ldots < t_n <1$.
\item[2)] For each $t_i$, $i=1, \ldots, n$, find -- with possibly different approaches -- a nearly
optimal minimising classifier\footnote{%
As before, we identify a set with its indicator function that gives the value 1 on the set
and the value 0 on its complement.
} $H(t_i)$ of $L(H,t_i)$, $H\in\mathcal{H}$.
\item[3)] Let $Z = \sum_{i=1}^n (t_i - t_{i-1})\,\mathbf{1}_{H(t_i)}$.
\item[4)] For all $j$ with $L(\{Z > t_j\}, t_j) < L(H(t_j), t_j)$, replace $H(t_j)$ with
$\{Z > t_j\}$.
\item[5)] Repeat steps 3) and 4) until $L(\{Z > t_j\}, t_j) \ge L(H(t_j), t_j)$ for all $j$.
\item[6)] Calculate $\widehat{q} = \sum_{i=1}^n (t_i - t_{i-1})\,Q[H(t_i)]$
as estimate of the positive class prior probability $Q[A_1]$ under the target distribution.
\end{itemize}

\section{Conclusions}
\label{se:Conc}

We have shown that covariate shift is a fragile notion, in the sense that the
invariance of the posterior class probabilities between source and target distributions
may be lost if the set of covariates on which the posterior probabilities are conditioned is diminished. 
This observation implies that under covariate shift simple estimators of the target prior class probabilities
are infeasible if they are designed 
in the style of the confusion matrix method (adjusted count) which is a popular quantifier
under prior probability shift. 

Valid methods for class prior estimation under covariate shift are the careful 
estimation of the posterior class probabilities conditioned on the full set or 
a sufficient subset of the covariates, 
combined with subsequently averaging
them on the target dataset (probabilistic classify \& count). The application of probing as described in 
Section~\ref{se:Probing} could also prove useful for class prior estimation under covariate shift.
So far, probing for class prior estimation has not yet been thoroughly tested. This could be a 
subject for future research.
\subsubsection{Acknowledgements.} The author is grateful to 
Juan Jos\'e del Coz and Pablo Gonz\'a{}lez for drawing his attention
to the subject of class prior estimation under covariate shift and to four anonymous
reviewers whose comments redounded to significant improvements of the paper.

%
% ---- Bibliography ----
%
% BibTeX users should specify bibliography style 'splncs04'.
% References will then be sorted and formatted in the correct style.
%
\bibliographystyle{splncs04}
\bibliography{CovariateShiftNoProblem}

\end{document}